\documentclass[10pt]{article}

\unless\ifdefined\endofdump\makeatletter\let\endofdump\relax\makeatother\fi

\usepackage[a4paper]{geometry}
\usepackage[T1]{fontenc}
\usepackage[utf8]{inputenc}
\usepackage{lmodern}
\usepackage{microtype}
\usepackage{amsmath,amssymb,amsthm}
\usepackage{booktabs}
\usepackage{array}
\usepackage{graphicx} 
\usepackage{todonotes}
\usepackage{xcolor}
\usepackage{ulem}
\usepackage{float}
\usepackage{marginnote}
\usepackage{enumerate}
\usepackage{paralist}
\usepackage{hyperref}
\usepackage{abstract}
\newcommand{\hlgreen}[1]{%
  \bgroup\markoverwith{\textcolor{addgreen}{\rule[-0.5ex]{2pt}{2.8ex}}}\ULon{#1}}
\newcommand{\hlred}[1]{%
  \bgroup\markoverwith{\textcolor{delred}{\rule[-0.5ex]{2pt}{2.8ex}}}\ULon{#1}}
\newcommand{\hlyellow}[1]{%
  \bgroup\markoverwith{\textcolor{todoyellow}{\rule[-0.5ex]{2pt}{2.8ex}}}\ULon{#1}}
\definecolor{addgreen}{HTML}{D5F5E3}   
\definecolor{delred}{HTML}{FADBD8}     
\definecolor{deltext}{HTML}{C0392B}    
\definecolor{addtext}{HTML}{1E8449}    
\definecolor{commentbg}{HTML}{D6EAF8}  
\definecolor{todoyellow}{HTML}{FFF9C4} 

\newcommand{\mcomment}[1]{\marginpar{\raggedright\scriptsize\colorbox{commentbg}{\parbox{0.95\marginparwidth}{#1}}}}
\newcommand{\ma}[2][]{\ifx\\#1\\\else\mcomment{+ #1}\fi{\color{addtext}\hlgreen{#2}}}
\newcommand{\md}[2][]{\ifx\\#1\\\else\mcomment{-- #1}\fi{\color{deltext}\hlred{#2}}}
\newcommand{\mr}[3][]{\ifx\\#1\\\else\mcomment{$\rightarrow$ #1}\fi\md{#2}\,\ma{#3}}
\newcommand{\mtodo}[2][]{\ifx\\#1\\\else\mcomment{TODO: #1}\fi\hlyellow{#2}}

\usepackage[numbers,sort&compress]{natbib}

\hypersetup{
  colorlinks=true,
  linkcolor=blue!60!black,
  urlcolor=blue!60!black,
  citecolor=blue!60!black
}

\newtheorem{theorem}{Theorem}
\newtheorem{lemma}{Lemma}

\newtheorem{remark}{Remark}

\title{Sparse Prefix Caching for Hybrid and Recurrent LLM Serving}
\author{Mikhail Shirokikh \and Sergey Nikolenko}
\date{\today}

\ifdefined\endofdump\endofdump\fi 
\begin{document}

\maketitle
\begin{abstract}
  Prefix caching is a key latency optimization for autoregressive
LLM serving, yet existing systems assume dense per-token key/value reuse.
State-space models change the structure of the problem: a recurrent layer can
resume from a single stored state rather than requiring the entire token
history. This asymmetry opens a new design point between no reuse and dense
caching: store exact recurrent states at a \emph{sparse} set of checkpoint
positions and, on a cache hit, resume from the deepest stored checkpoint and
recompute the remaining suffix exactly.

We formalize sparse prefix caching as checkpoint placement under a distribution over overlap depths, yielding an exact $O(NM)$ dynamic program. For use cases where requests share a non-trivial prefix (e.g.\ asking different questions about a single long document), we show that our method consistently improves the Pareto frontier traced by standard heuristics on real-world data. Across QuALITY and System Prompts, distribution-aware placement dominates every fixed-budget baseline on the measured layer-group Pareto frontier and matches or outperforms the strongest heuristic (block caching) while typically using substantially fewer checkpoints, with the largest gains at low checkpoint budgets where the overlap distribution is most non-uniform. The method is most relevant when many requests share a substantial but not identical prefix within a retained cache entry. It preserves exact outputs, does not change the recurrent computation itself or require new recurrent update kernels, applies to recurrent/SSM layers whose hidden state can be extracted and restored exactly, and for hybrid models can be combined with existing KV-cache compression techniques.

\end{abstract}

\section{Introduction}

Prefix caching avoids redundant prefill computation when successive requests
share a common prefix.  In Transformer-centric serving systems, the reusable state
consists of dense per-token key/value tensors, and modern runtimes exploit
paging~\citep{kwon2023pagedattention}, radix-tree
sharing~\citep{zheng2023sglang}, and distributed prompt
scheduling~\citep{srivatsa2024preble} to make this practical at scale.

Recent progress in state-space and linear recurrent
models~\citep{gu2023mamba,dao2024ssd,peng2023rwkv,yang2024gla} has
produced architectures whose recurrent layers maintain a fixed-size
hidden state rather than per-token key/value pairs. Hybrid architectures that 
combine attention with recurrent or state-space
layers~\citep{qwen2025qwen35} complicate this picture. Since a recurrent layer evolves its state as
$h_t = F(h_{t-1}, x_t)$, continuing from token~$t$ requires only $h_t$, not
the entire history of intermediate states.  Dense per-token caching of
recurrent state is therefore both unnecessary and prohibitively expensive: for
models such as Qwen-3.5, the recurrent (GatedDeltaNet) state per head scales as
$d_{\mathrm{head}}^2$, whereas attention KV storage grows only linearly in
$d_{\mathrm{head}}$ per head.

Hybrid and SSM serving do not have to choose between full dense caching and no caching at all; they can instead
store the recurrent state at a \emph{sparse} set of checkpoint positions and
recompute the gap between the deepest stored checkpoint and the divergence
point on demand. 

The resulting optimization problem is: given a memory budget of $M$ checkpoint
slots along a prefix of length~$N$, where should checkpoints be placed to
minimize expected recomputation under the overlap structure of future
requests? This question is orthogonal to the cache-admission and eviction
policies studied by Marconi~\cite{pan2024marconi}, which decide which entries to retain.
We instead ask how to make \emph{partial} prefix overlap useful for recurrent
layers by selecting which tokens to checkpoint within every single entry.

This is particularly beneficial when requests share long
but not fully matching prefixes, e.g.\ when consecutive requests share a long system
prompt, start with RAG-retrieved text, or ask different questions about the same
document. For chat-like workloads, where each consecutive request within a
conversation contains the previous one as a substring, the optimal strategy is obviously to
store only the last state, so our method does not bring additional benefits (but also incurs no overhead).

The problem is reminiscent of activation checkpointing in
training~\citep{chen2016training}, where a subset of intermediate activations
is stored during the forward pass and the rest are recomputed during
backpropagation. The key difference is that activation checkpointing
optimizes a single, known computation graph, whereas our setting involves
stochastic overlap structure across requests, shared cache capacity, and
online estimation of the overlap distribution.

\paragraph{Our contributions.}
\begin{enumerate}
  \item We prove that balanced spacing is optimal under uniform overlap and in the worst case (Theorem~\ref{thm:balanced-optimal}).
  \item We formalize sparse prefix caching as a one-sided weighted
  $k$-median problem with an exact $O(NM)$ dynamic program (Theorem~\ref{thm:dp}). 
  \item Furthermore, to enable using the empirical overlap depth distribution over
  previous requests instead of the true distribution over future overlaps,
  we show that the value function is Lipschitz in the overlap law
  (Lemma~\ref{thm:stability}) and therefore its suboptimality
  can be bounded by total variation between the empirical and true
    distributions (Theorem~\ref{thm:empirical-oracle}). We also provide a similar bound for
    the case of drift in overlap depth distributions (Theorem~\ref{thm:drift-tracking}).
  \item We validate these results on real-world datasets (QuALITY, NarrativeQA, System Prompts) and show that our
  method saves more recomputations per checkpoint than every fixed-budget baseline we consider. On QuALITY and System Prompts, the DP-derived schedule yields prototype wall-clock speedups on a representative layer group, with the largest gains at low checkpoint budgets where the overlap distribution is most non-uniform.
\end{enumerate}

The rest of the paper is organized as follows. Section~\ref{sec:related} reviews prior work on prefix caching, KV-cache compression, hybrid-model serving, and related checkpointing and facility-location formulations. Section~\ref{sec:formulation} introduces the sparse prefix caching problem and its distributional formulation. Section~\ref{sec:theory} presents the main theoretical results, including optimal balanced placement under uniform overlap, the exact dynamic program, and plug-in guarantees under stationary and drifting overlap laws. Section~\ref{sec:method} describes the experimental setup, datasets, baselines, and implementation constraints. Section~\ref{sec:results} reports token-savings and prototype wall-clock results. Section~\ref{sec:conclusion} concludes, and the appendix contains additional proofs and supplementary figures.

\section{Related Work}
\label{sec:related}

\paragraph{Serving systems and dense prefix reuse.}
The PagedAttention mechanism in vLLM demonstrated that in practice, serving LLMs depends a lot on flexible KV-cache sharing and low-fragmentation memory management~\citep{kwon2023pagedattention}. SGLang introduced RadixAttention for structured multi-call applications, allowing for automatic prefix sharing across programs~\citep{zheng2023sglang}, while Preble studied distributed prompt scheduling that co-optimizes sharing and load balancing~\citep{srivatsa2024preble}. CacheBlend addressed exact reuse of non-contiguous cached chunks as needed in retrieval-augmented generation~\citep{yao2024cacheblend}. ChunkAttention proposed a two-phase partition scheme for prefix-aware decoding that further reduces redundant
attention computation~\citep{ye2024chunkattention}. All of these systems assume dense token-indexed KV state; however; our setting is different because recurrent layers can be resumed from sparse exact checkpoints.

\paragraph{KV cache compression and selection.}
A large body of work reduces the memory footprint of dense attention caches
through eviction, quantization, or sparsification.  Scissorhands identifies
tokens that are ``pivotal'' for future attention and evicts the
rest~\citep{liu2024scissorhands}.  H$_2$O retains only a heavy-hitter set of
KV entries that accumulate the most attention
mass~\citep{zhang2024h2o}.  PyramidKV and PyramidInfer exploit the observation
that later layers need fewer cached tokens, and allocates budgets in a
layer-adaptive pyramid shape~\citep{cai2024pyramidkv,yang2024pyramidinfer}.
SnapKV clusters attention features to select representative
KV entries per layer~\citep{li2024snapkv}, and DMC proposes a learnable
policy that decides which tokens to merge, evict, or
keep~\citep{nawrot2024dmc}.

In contrast to these, our method targets recurrent states rather than KV cache,
and keeps the outputs exact.

\paragraph{Hybrid-model prefix caching.}
The closest prior work has been done by Marconi et al.~\citep{pan2024marconi}.  It identifies that hybrid models use in-place recurrent updates that make partial rollback hard, pushing serving toward exact-match entries and creating many large, low-value cache candidates. The authors propose improved admission and eviction policies. Our emphasis in this work is complementary: rather than improving the management of exact recurrent cache entries, we change the unit of reuse itself by storing sparse exact checkpoints so that partial prefix overlap becomes useful.

\paragraph{Activation checkpointing in training.}
Sparse prefix caching is structurally similar to activation checkpointing~\citep{chen2016training}, where a subset of layer activations is stored during the forward pass to reduce peak memory during backpropagation. Both problems ask where to place checkpoints along a sequential computation so as to minimize recomputation under a memory budget. The key differences are that:
\begin{inparaenum}[(i)]
\item activation checkpointing optimizes a single deterministic computation graph with known topology, whereas we optimize across requests with stochastic
overlap, and
\item in serving, checkpoint placement interacts with cache admission and eviction, which has no counterpart in training.
\end{inparaenum}

\paragraph{Facility location and $k$-median problems.}
The offline checkpoint placement problem we study is a one-sided, one-dimensional weighted $k$-median on a line. Classical $k$-median is NP-hard in general metric spaces~\citep{megiddo1984complexity}, but the one-dimensional case admits efficient dynamic programs~\citep{hassin1991improved,grover1966kmediansquare}. Our convex-hull-trick solver exploits the monotone structure of prefix sums to achieve $O(NM)$ time, which is similar to the SMAWK-based optimizations used in related interval-scheduling problems~\citep{aggarwal1987smawk}.

\section{Problem Formulation}
\label{sec:formulation}

Consider a cached prefix of length $N$ and a checkpoint set
\[
\mathcal{C} = \{c_1, c_2, \dots, c_M\}, \qquad
1 \le c_1 < \cdots < c_M \le N.
\]
If a future request overlaps the prefix up to depth $t$, then the deepest
reusable checkpoint is
\[
\ell(t; \mathcal{C}) = \max(\{0\} \cup \{c \in \mathcal{C} : c \le t\}),
\]
and exact resumption requires recomputing
\[
r(t; \mathcal{C}) = t - \ell(t; \mathcal{C})
\]
tokens of recurrent work. Let
\[
R_{\mathrm{nc}} = E[T] = \sum_{t=1}^N p_t t
\]
denote the no-cache recurrent-work baseline. We report normalized savings as
\[
\mathrm{Savings}(C)=1-\frac{E[r(T;C)]}{R_{\mathrm{nc}}}.
\]
For plots whose vertical axis is marked ``$\times$'', we additionally report the recurrent-work reduction factor
\[
\mathrm{Reduction}(C)=\frac{R_{\mathrm{nc}}}{E[r(T;C)]}
=\frac{1}{1-\mathrm{Savings}(C)}.
\]

\begin{remark}[Exactness of resuming from a checkpoint]
\label{rem:exactness}
Because the recurrent update $h_t = F(h_{t-1}, x_t)$ is deterministic,
resuming from an exact checkpoint~$h_c$ and replaying tokens $x_{c+1}, \dots,
x_t$ produces a state identical to full replay from $h_0$.  Sparse
checkpointing therefore changes only the location {where} states are stored, not the
model output. We have verified this experimentally: across all datasets, the
checkpoint-resume path produces outputs that are identical to full prefill.
\end{remark}

\paragraph{Distributional formulation.} We consider a version of
this problem where exact request structure is ignored for simplicity and the
overlap depth of each request is an i.i.d. sample from the \emph{overlap
depth distribution} $T \in \{1,\dots,N\}$ with mass $p_t = \Pr[T=t]$. 
The objective is the expected recomputation cost 
$
\mathbb{E}_T[r(T; \mathcal{C})] = \sum_{t=1}^N p_t\, (t - \ell(t;
\mathcal{C}))
$.

\paragraph{Conditioning on retained hits.}
The theoretical objective is conditional on a retained cache entry whose
prefix overlaps the incoming request. Admission/eviction and exact-match miss
handling are orthogonal system components; in the experiments below, we
combine sparse checkpoint placement with a fixed last-$K$ entry policy to
study the resulting runtime behavior.

\paragraph{Drift generalization.} We also consider a distribution drift setup,
where the overlap depth distribution changes over time. This version is
motivated by the fact that the cache hit rate changes over time for
real-world deployments, either because of changing usage scenarios~\citep{patel2024splitwise},
steadily growing context lengths~\citep{fu2024challenges}, or cache saturation~\citep{pan2024marconi}.

\paragraph{Generalizing to a trie structure.}
Real cached prefixes form a trie (prefix tree), and the truly optimal checkpoint placement would optimize over this trie structure.  However, common real-world topologies are either star-shaped (e.g., a chat with one root and many leaves) or comb-shaped (shared document with varying suffixes). In both cases, each edge of the trie is a single long shared prefix, so the trie-optimal solution decomposes into independent per-edge problems, each of which is exactly the single-prefix formulation above. We therefore restrict our attention to the single-prefix case without loss of generality for these topologies.

\section{Theoretical Analysis}
\label{sec:theory}

\subsection{Optimal checkpoint placement}

We now collect the main mathematical results about sparse checkpoint placement. Appendix~\ref{app:proofs} contains the proofs of all statements in this section. 

Throughout this section we define gap lengths
\[
g_i = c_{i+1}-c_i, \qquad i=0,\dots,M,
\]
where $c_0=0$ and $c_{M+1}=N+1$. Then the $g_i$ are positive integers summing to
$N+1$.

\paragraph{Balanced checkpointing.}We begin by studying the simplest strategy where checkpoints are
evenly spaced. We call a checkpoint set $\mathcal{C}$ \emph{balanced} if
$\forall i \neq j: |g_i - g_j| \leq~1$.

\begin{theorem}[Balanced checkpointing is optimal under uniform overlap
and in worst-case]
\label{thm:balanced-optimal}
Assume $p_t = 1/N$ for all $t=1,\dots,N$. Let $K=M+1$ and write $N+1 = qK + \rho$ with $0 \le \rho < K$. Then,
\begin{enumerate}
\item Checkpoint set minimizes $\mathbb{E}[r(T;\mathcal{C})]$ if and only if it
is {balanced}. The exact optimal expected recomputation is
\[
\mathbb{E}[r(T;\mathcal{C}^\star)]
= \frac{1}{N}
\left(
(K-\rho)\frac{q(q-1)}{2} + \rho\frac{q(q+1)}{2}
\right)
= \frac{N}{2(M+1)} + O(1),
\]
where the $O(1)$ term is for $N\to\infty$ with $M$ fixed.
\item {Balanced} sets also minimize $\max\limits_t\, r(t;\mathcal{C})$
with worst-case recomputation of $\Big\lceil\frac{N+1}{M+1}\Big\rceil-1$ tokens. 

\end{enumerate}
\end{theorem}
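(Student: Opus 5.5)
Under the uniform law $p_t=1/N$, I will first rewrite the objective in terms of the gaps. Within gap $i$, the depths $t=c_i,\dots,c_{i+1}-1$ all share the deepest checkpoint $\ell(t;\mathcal{C})=c_i$, so $r$ takes the values $0,1,\dots,g_i-1$. For $i=0$ these are $t=0,\dots,g_0-1$; the value $t=0$ is not in the support but contributes $0$. Since $c_{M+1}=N+1$, the depths $t=1,\dots,N$ are covered exactly once. This gives the identity
\[
\mathbb{E}[r(T;\mathcal{C})]=\frac1N\sum_{i=0}^{M}\frac{g_i(g_i-1)}{2},
\]
where the $g_i$ range over all $K$-tuples of positive integers summing to $N+1$. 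The correspondence between checkpoint sets and such tuples is a bijection. So part 1 becomes minimizing $\Phi(g)=\sum_i \phi(g_i)$ with $\phi(x)=x(x-1)/2$ strictly convex on the integers.

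Next comes the exchange argument. If some $g_i\ge g_j+2$, replace $(g_i,g_j)$ by $(g_i-1,g_j+1)$. This keeps positivity and the total, and it changes $\Phi$ by $-(g_i-1)+g_j=-(g_i-g_j-1)\le -1<0$. Hence every minimizer is balanced, and existence is trivial since the feasible set is finite.

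For the converse, note that a balanced tuple with sum $N+1=qK+\rho$ must consist of exactly $\rho$ entries equal to $q+1$ and $K-\rho$ entries equal to $q$. Indeed, all entries lie in $\{a,a+1\}$, and the sum forces $a=q$ together with the stated count (if $\rho=0$, all entries equal $q$). So all balanced sets have the same objective value. Because some minimizer exists and every minimizer is balanced, all balanced sets are optimal. This gives the ``if and only if'', and substituting yields the displayed closed form.

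For the asymptotics, write the value as $\frac{1}{2N}\sum_i g_i^2-\frac{N+1}{2N}$. Then $\sum_i g_i^2=K q^2+2\rho q+\rho=(N+1)^2/K+O(K)$ using $\rho<K$. With $M$ fixed this gives $N/(2(M+1))+O(1)$.

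For part 2, note that $\max_t r(t;\mathcal{C})=\max_i(g_i-1)$. Here I must check that the last index of each gap is a genuine depth in $\{1,\dots,N\}$ (or is $t=0$ with $r=0$ when $g_0=1$). This holds because $c_{i+1}-1\le N$. Since the $K$ gaps sum to $N+1$, pigeonhole gives $\max_i g_i\ge\lceil (N+1)/K\rceil$. A balanced tuple attains $\max_i g_i=q+[\rho>0]=\lceil (N+1)/K\rceil$, so the minimal worst case is $\lceil (N+1)/(M+1)\rceil-1$, achieved by balanced sets.

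The claim is only that balanced sets are among the worst-case minimizers, not that they are the only ones, and I will state it that way. The only delicate step is the bookkeeping at the boundaries: gap $0$ includes the non-support depth $t=0$, and the sentinel $c_{M+1}=N+1$ is not itself a depth. I expect this boundary bookkeeping to be the main place where care is needed. The convexity exchange itself is routine.
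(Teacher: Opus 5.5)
Your proposal is correct and follows the paper's proof essentially step for step: the same gap identity $N\,\mathbb{E}[r(T;\mathcal{C})]=\sum_{i=0}^{M} g_i(g_i-1)/2$, the same unit-transfer exchange showing every minimizer is balanced, the same identification of balanced gap multisets as $K-\rho$ copies of $q$ and $\rho$ copies of $q+1$, and the same pigeonhole bound $\max_i g_i \ge \lceil (N+1)/(M+1)\rceil$ for the worst case. Your explicit appeal to finiteness of the feasible set in the converse direction, and your boundary bookkeeping at $t=0$ and $c_{M+1}=N+1$, make precise two points the paper leaves implicit, but the route is the same.
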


One notable regime is $M=O(\sqrt N)$, where
the number of checkpoints and the expected/worst-case recomputation both grow
at rate $\sqrt N$.

While Theorem~\ref{thm:balanced-optimal} provides optimal
checkpoint sets under uniform and worst-case assumptions,
its results are unlikely to extend to real-world serving systems,
which target average throughput rather than worst-case latency~\citep{yu2022orca,kwon2023pagedattention,zheng2023sglang},
and where the overlap distribution is far from
uniform (Fig.~\ref{fig:overlap-distribution}).

\paragraph{Distribution-aware checkpointing.} For a given overlap distribution
$p_1,\dots,p_N$ and budget $M$, let $\mathrm{dp}[m,j]$ denote the minimum
recomputation cost of serving depths $\{1,\dots,j\}$ with $m$ checkpoints.
If the last checkpoint is placed at position $s \le j$, the cost decomposes into:
\begin{enumerate}[(1)]
  \item the optimal cost of serving $\{1,\dots,s{-}1\}$ with $m{-}1$
  checkpoints, namely $\mathrm{dp}[m{-}1,s{-}1]$;
  \item the cost of using $s$ as the deepest checkpoint for depths
  $t \in \{s,\dots,j\}$, namely $w(s,j)=\sum_{t=s}^j p_t(t{-}s)$
\end{enumerate}

\begin{theorem}[Exact dynamic program]
\label{thm:dp}
For a given overlap depth distribution, the optimal checkpoint placement
satisfies the recurrence 
\[
\mathrm{dp}[0,j] = \sum_{t=1}^{j} p_t\, t,\qquad
\mathrm{dp}[m,j]
=
\min_{1 \le s \le j}
\bigl(
\mathrm{dp}[m-1,s-1] + w(s,j)
\bigr).
\]

With prefix sums $P_j = \sum_{t=1}^j p_t$ and $T_j = \sum_{t=1}^j t\, p_t$,
each DP layer can be evaluated in $O(N)$ time using the monotone convex hull
trick (the slopes $-s$ are monotone in $s$ and the query points $P_j$ are
non-decreasing in $j$), giving $O(NM)$ total time.
\end{theorem}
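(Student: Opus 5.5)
We prove the two claims of Theorem~\ref{thm:dp} in turn: first the correctness of the recurrence, then the $O(N)$-per-layer evaluation.

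\textbf{Correctness of the recurrence.} We interpret $\mathrm{dp}[m,j]$ as the minimum, over checkpoint sets $\mathcal{C}\subseteq\{1,\dots,j\}$ with $|\mathcal{C}|\le m$, of $\sum_{t=1}^j p_t\, r(t;\mathcal{C})$. Allowing fewer than $m$ checkpoints is harmless, since adding a checkpoint never increases any $r(t;\cdot)$. The key observation is that $r(t;\mathcal{C})$ depends only on checkpoints $\le t$. Hence checkpoints beyond $j$ are irrelevant for depths $\le j$.

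For the base case $m=0$ we have $\ell\equiv0$, which gives $\mathrm{dp}[0,j]=\sum_{t\le j}p_t t$.

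For $m\ge1$, consider an optimal set and let $s$ be its largest element. The empty case is dominated by placing any single checkpoint, or equals it when $p$ vanishes. Depths $t\ge s$ in $\{s,\dots,j\}$ have $\ell(t)=s$ and contribute $w(s,j)$. Depths $t<s$ see only the remaining at most $m-1$ checkpoints, all lying in $\{1,\dots,s-1\}$, so their cost is at least $\mathrm{dp}[m-1,s-1]$. This gives "$\ge$".

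Conversely, take any $s$ and an optimal set for $(m-1,s-1)$, then append $s$. This achieves exactly $\mathrm{dp}[m-1,s-1]+w(s,j)$, which gives "$\le$".

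The answer is $\mathrm{dp}[M,N]$. The argmin pointers recover $\mathcal{C}$. If exactly $M$ checkpoints are required, we pad with unused positions, which does not increase the cost.

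\textbf{Linear-time layers.} We expand $w(s,j)=(T_j-T_{s-1})-s(P_j-P_{s-1})$. Then
\[
\mathrm{dp}[m,j]=T_j+\min_{1\le s\le j}\bigl(-s\,P_j+b_s\bigr),\qquad b_s=\mathrm{dp}[m-1,s-1]-T_{s-1}+sP_{s-1}.
\]
Each candidate $s$ is therefore a line with slope $-s$ and intercept $b_s$, evaluated at the query point $x=P_j$.

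We process $j=1,\dots,N$ in order, inserting line $s=j$ before answering query $j$. Slopes are inserted in strictly decreasing order. Queries are non-decreasing because $p_t\ge0$. This is exactly the monotone convex hull trick setting: maintain a deque representing the lower envelope, pop dominated lines from the back on insertion, and advance a front pointer on queries.

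Each line is pushed and popped at most once, so a layer costs amortized $O(N)$. Over $M$ layers the total is $O(NM)$, with $O(N)$ memory per layer plus $O(NM)$ for back-pointers if reconstruction is needed.

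\textbf{Main obstacle.} The delicate step is justifying the monotone hull pointer rigorously. Three points need care.
\begin{enumerate}
\item The lower envelope of lines with decreasing slopes is traversed left-to-right as $x$ increases. This is what allows discarding front lines once they are beaten at the current $x$.
\item Lines discarded from the front are never optimal for larger $x$. This requires the standard lemma that the envelope's breakpoints are sorted.
\item Ties and equal query values (repeated $P_j$ when $p_t=0$) must not break the argument. Non-strict comparisons handle both cases.
\end{enumerate}
I would state the envelope lemma briefly and cite the standard analysis for the amortized bound.
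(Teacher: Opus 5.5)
Your proposal is correct and follows the paper's proof essentially step for step. You use the same decomposition by the deepest checkpoint $s$, the same expansion $w(s,j)=(T_j-T_{s-1})-s(P_j-P_{s-1})$ into lines of slope $-s$ queried at $x=P_j$, and the same monotone convex-hull-trick argument for $O(N)$ per layer. Your added justification of the recurrence (reading $\mathrm{dp}[m,j]$ as a minimum over at most $m$ checkpoints in $\{1,\dots,j\}$, and showing an optimal set can be taken nonempty) is extra rigor that the paper's one-line ``follows from the cost decomposition'' leaves implicit; it also sidesteps the boundary cases $j<m$, where exactly $m$ distinct checkpoints do not fit.
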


For a fixed estimated overlap law and budget, this schedule can be
computed offline and refreshed only when the histogram estimate is updated; it
adds no optimization overhead on the serving path itself.

\subsection{Empirical histograms}

In practice, we do not have access to true overlap depth probabilities since they can only be estimated using historical data. {Lemma}~\ref{thm:stability} justifies using the empirical histogram oracle: if the observed overlap histogram converges to the true law in total variation, then the optimal sparse schedule is necessarily close in value.

\begin{lemma}[Stability under distribution misspecification]
\label{thm:stability}
\[
\left|
\mathbb{E}_p[r(T;\mathcal{C})] - \mathbb{E}_q[r(T;\mathcal{C})]
\right|
\le N \|p-q\|_1.
\]
\end{lemma}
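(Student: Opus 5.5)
The plan is to treat the recomputation cost as a fixed bounded function of the overlap depth and apply the elementary bound for differences of expectations. Fix $\mathcal{C}$ and set $f(t) = r(t;\mathcal{C}) = t - \ell(t;\mathcal{C})$ for $t \in \{1,\dots,N\}$. The function $f$ depends only on $\mathcal{C}$, not on the law of $T$. Hence both expectations are linear functionals of the same vector $(f(1),\dots,f(N))$:
\[
\mathbb{E}_p[r(T;\mathcal{C})] - \mathbb{E}_q[r(T;\mathcal{C})] = \sum_{t=1}^N (p_t - q_t)\, f(t).
\]

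The first step is to bound $f$ uniformly. Since $\ell(t;\mathcal{C}) \ge 0$ (the maximum includes $0$), we have $f(t) \le t \le N$. Since $\ell(t;\mathcal{C}) \le t$ by definition, we have $f(t) \ge 0$. So $0 \le f(t) \le N$ for every $t$.

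The second step applies the triangle inequality:
\[
\Bigl|\sum_t (p_t - q_t) f(t)\Bigr| \le \sum_t |p_t - q_t|\,|f(t)| \le N \|p-q\|_1,
\]
which is the claim.

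There is no real obstacle here. The only point needing care is the range of $f$, namely that $\ell(t;\mathcal{C})\in[0,t]$, which follows directly from the definition. As an optional remark, one could sharpen the constant to $\tfrac{N}{2}\|p-q\|_1$: because $\sum_t (p_t-q_t)=0$, we may subtract the constant $N/2$ from $f$ and use $|f(t)-N/2|\le N/2$. The stated bound does not require this refinement.
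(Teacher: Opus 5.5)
Your proof is correct and is exactly the paper's argument: write the difference as $\sum_t (p_t-q_t)\,r(t;\mathcal{C})$, use $0\le r(t;\mathcal{C})\le N$, and apply the triangle inequality (your $\tfrac{N}{2}$ sharpening is also valid). The paper additionally notes that, because the bound is uniform in $\mathcal{C}$, it carries over to the optimal values, $|V_M(p)-V_M(q)|\le N\|p-q\|_1$, which is the form used in Theorem~\ref{thm:empirical-oracle}.
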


Note that the total variation in the right-hand side can be further bounded when
$q$ is the empirical distribution of $p$~\citep{devroye1986combinatorial,mcdiarmid1989method,berend2012empirical}. Theorem~\ref{thm:empirical-oracle} gives a self-contained stationary-learning guarantee for the histogram oracle: under i.i.d.\ overlap, empirical re-estimation becomes near-optimal at the familiar $n^{-1/2}$ rate.

\begin{theorem}[High-probability plug-in guarantee for the empirical histogram oracle]
\label{thm:empirical-oracle}
Let $T_1,\dots,T_n$ be i.i.d.\ overlap samples from a law
$p=(p_1,\dots,p_N)$, let $\hat p^{(n)}$ be the empirical distribution, and let
$\widehat{\mathcal{C}}_n$ be any optimal checkpoint set of size $M$ for
$\hat p^{(n)}$.  Then for every $\delta \in (0,1)$, with probability at least
$1-\delta$,
\[
\|\hat p^{(n)} - p\|_1
\le
\sqrt{\frac{N}{n}} + \sqrt{\frac{2\log(1/\delta)}{n}}.
\]
Consequently, the plug-in schedule satisfies
\[
0 \le \mathbb{E}_p[r(T;\widehat{\mathcal{C}}_n)] - V_M(p)
\le 2N\left(\sqrt{\frac{N}{n}} + \sqrt{\frac{2\log(1/\delta)}{n}}\right),
\]
where $V_M(p)$ is the optimal $M$-checkpoint cost under the true law.
\end{theorem}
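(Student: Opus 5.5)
The proof has two parts: a concentration bound for $\|\hat p^{(n)}-p\|_1$, and a standard plug-in (excess-risk) argument using Lemma~\ref{thm:stability}.

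\textbf{Concentration.} We first bound the expectation. Writing $\hat p_t = \frac1n\sum_{i=1}^n \mathbf{1}[T_i=t]$, each $\hat p_t$ is unbiased with variance $p_t(1-p_t)/n \le p_t/n$. By Jensen, $\mathbb{E}|\hat p_t - p_t| \le \sqrt{p_t/n}$. Summing over $t$ and applying Cauchy--Schwarz gives
\[
\mathbb{E}\|\hat p^{(n)}-p\|_1 \le \sum_{t=1}^N \sqrt{p_t/n} \le \sqrt{N/n}.
\]
Next, the map $f(T_1,\dots,T_n)=\|\hat p^{(n)}-p\|_1$ has bounded differences. Changing a single $T_i$ moves mass $1/n$ from one coordinate to another, so $f$ changes by at most $2/n$. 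McDiarmid's inequality then gives
\[
\Pr\bigl[f-\mathbb{E}f\ge\varepsilon\bigr]\le\exp\bigl(-2\varepsilon^2/(n\cdot 4/n^2)\bigr)=\exp(-n\varepsilon^2/2).
\]
Setting this equal to $\delta$ yields $\varepsilon=\sqrt{2\log(1/\delta)/n}$, which is exactly the first claim.

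\textbf{Plug-in bound.} Let $\mathcal{C}^\star$ be optimal for $p$, and abbreviate $R_q(\mathcal{C})=\mathbb{E}_q[r(T;\mathcal{C})]$. The lower bound $0\le R_p(\widehat{\mathcal C}_n)-V_M(p)$ is immediate from the definition of $V_M(p)$ as a minimum over all $M$-sets. For the upper bound, decompose
\[
R_p(\widehat{\mathcal C}_n)-R_p(\mathcal C^\star)
=\bigl[R_p(\widehat{\mathcal C}_n)-R_{\hat p}(\widehat{\mathcal C}_n)\bigr]
+\bigl[R_{\hat p}(\widehat{\mathcal C}_n)-R_{\hat p}(\mathcal C^\star)\bigr]
+\bigl[R_{\hat p}(\mathcal C^\star)-R_p(\mathcal C^\star)\bigr].
\]
The middle bracket is $\le 0$ because $\widehat{\mathcal C}_n$ is optimal for $\hat p^{(n)}$. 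Each outer bracket is at most $N\|\hat p^{(n)}-p\|_1$ by Lemma~\ref{thm:stability}. That lemma holds for any fixed $\mathcal C$, and here it is applied pointwise even though $\widehat{\mathcal C}_n$ is data-dependent, so no uniform-convergence argument is needed. This gives $2N\|\hat p^{(n)}-p\|_1$, and on the event from the first part the claim follows. One technicality: if the theorem requires exactly $M$ checkpoints and $N<M$, the set is ill-defined, so we implicitly assume $M\le N$.

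\textbf{Main obstacle.} The only delicate step is the constant in the bounded-differences computation: one must confirm that the change in $f$ is $2/n$ rather than $1/n$. With the sum of squared differences equal to $4/n$, McDiarmid produces exactly the $\sqrt{2\log(1/\delta)/n}$ term. The rest is routine.
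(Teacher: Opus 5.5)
Your proof is correct and follows essentially the paper's argument. The paper likewise uses Cauchy--Schwarz plus Jensen to get $\mathbb{E}\|\hat p^{(n)}-p\|_1 \le \sqrt{N/n}$ (you apply them in the opposite order), McDiarmid with $2/n$ bounded differences, and Lemma~\ref{thm:stability} twice to reach $2N\|\hat p^{(n)}-p\|_1$. One point in your favor: your three-term decomposition through $\mathcal{C}^\star$ needs the lemma only for fixed checkpoint sets. The paper instead bounds $|V_M(\hat p^{(n)})-V_M(p)|$ directly, which relies on extending the lemma to the value function, an extension it only sketches with a circular pointer back to this proof. Your version spells out that swap argument.
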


We also study the scenario where the true overlap depth distribution can drift
over time. Exponential reweighting~\citep{BarveLong1997,MohriMunoz2012,MazzettoUpfal2023} is a classical technique to track drifting
distributions, where older samples receive exponentially decaying weights
proportional to $\gamma^{t-s}$ so that the reweighted empirical estimate
remains close in total variation to the current distribution. It allows us to
bound total variation and substitute these bounds into {Lemma}~\ref{thm:stability}.

\begin{theorem}[Bias-variance tracking bound for exponential histograms]
\label{thm:drift-tracking}
Let $T_1,\dots,T_t$ be independent overlap samples with time-varying laws
$p_1,\dots,p_t$, and define the exponentially weighted empirical histogram
\[
\hat p_t^{(\gamma)}
=
\frac{1-\gamma}{1-\gamma^t}
\sum_{s=1}^t \gamma^{t-s} e_{T_s},
\qquad 0<\gamma<1,
\]
where $e_{T_s}$ is the one-hot vector at the observed overlap depth.  Then
\[
\mathbb{E}\!\left[\|\hat p_t^{(\gamma)} - p_t\|_1\right]
\le
\frac{1-\gamma}{1-\gamma^t}\sum_{s=1}^t \gamma^{t-s}\|p_s-p_t\|_1
+ \sqrt{
N\frac{1-\gamma}{1+\gamma}\frac{1+\gamma^t}{1-\gamma^t}
}.
\]
\end{theorem}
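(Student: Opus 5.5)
We split the error by the triangle inequality into a deterministic bias term and a stochastic variance term. Write $w_s = \frac{1-\gamma}{1-\gamma^t}\gamma^{t-s}$ for $s=1,\dots,t$. These weights are nonnegative and satisfy $\sum_s w_s = 1$ (geometric series). Define the weighted mean law $\bar p_t = \sum_s w_s p_s$; note $\bar p_t = \mathbb{E}[\hat p_t^{(\gamma)}]$, since $\mathbb{E}[e_{T_s}] = p_s$ by independence. Then
\[
\|\hat p_t^{(\gamma)} - p_t\|_1 \le \|\bar p_t - p_t\|_1 + \|\hat p_t^{(\gamma)} - \bar p_t\|_1 .
\]

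\textbf{Bias term.} Because the $w_s$ sum to one, $\bar p_t - p_t = \sum_s w_s (p_s - p_t)$. Convexity of the norm then gives $\|\bar p_t - p_t\|_1 \le \sum_s w_s \|p_s - p_t\|_1$, which is exactly the first term of the bound. This term is deterministic, so it passes through the expectation unchanged.

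\textbf{Variance term.} For each coordinate $k$, the quantity $\hat p_{t,k} - \bar p_{t,k} = \sum_s w_s(\mathbf 1[T_s=k] - p_{s,k})$ is a sum of independent centered terms. Its variance is
\[
\sum_s w_s^2\, p_{s,k}(1-p_{s,k}) \le \sum_s w_s^2 p_{s,k}.
\]
By Cauchy--Schwarz applied first in the expectation (Jensen) and then over the $N$ coordinates,
\[
\mathbb{E}\|\hat p - \bar p\|_1 \le \sum_k \sqrt{\mathrm{Var}_k} \le \sqrt{N \sum_k \mathrm{Var}_k} \le \sqrt{N \sum_s w_s^2},
\]
where the last step uses $\sum_k p_{s,k} = 1$. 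It remains to compute
\[
\sum_s w_s^2 = \frac{(1-\gamma)^2}{(1-\gamma^t)^2}\cdot\frac{1-\gamma^{2t}}{1-\gamma^2} = \frac{1-\gamma}{1+\gamma}\cdot\frac{1+\gamma^t}{1-\gamma^t},
\]
using $1-\gamma^{2t} = (1-\gamma^t)(1+\gamma^t)$ and $1-\gamma^2 = (1-\gamma)(1+\gamma)$. This matches the second term exactly.

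The only delicate point is the variance step. We must use independence, not identical distribution, so that the cross terms vanish even though the laws $p_s$ differ. The coordinatewise Jensen-plus-Cauchy--Schwarz chain is what produces the $\sqrt{N}$ factor; everything else is algebra. Combining the two terms gives the claim.
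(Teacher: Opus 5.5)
Your proof is correct and follows the paper's argument essentially step for step: the same split around $\bar p_t=\sum_s w_s p_s$ into a bias term (handled by convexity) and a stochastic term (Jensen plus Cauchy--Schwarz, with independence killing cross terms), ending in the same geometric-sum computation. The only differences are cosmetic: you apply Jensen coordinatewise before Cauchy--Schwarz, and $\sum_k p_{s,k}(1-p_{s,k})$ is just $1-\|p_s\|_2^2$. One small wording slip: $\mathbb{E}[e_{T_s}]=p_s$ follows from the marginal law of $T_s$ and linearity, not from independence.
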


Combined with Lemma~\ref{thm:stability}, this immediately yields a corresponding excess-recomputation bound for schedules selected from the exponentially weighted histogram.

For large $t$, the variance term behaves as $\sqrt{N(1-\gamma)/(1+\gamma)}$.
Under bounded drift $\|p_{u}-p_{u-1}\|_1 \le \Delta$ for all $u$, the bias
term is at most $\Delta\gamma/(1-\gamma)$.  Setting $\gamma = 0.99$ (our
experimental choice) trades off these terms in favor of low variance while
still tracking moderate drift; empirically we did not observe strong
sensitivity to this choice across our datasets.

\section{Experimental Methodology}
\label{sec:method}

\paragraph{Baselines and Methods.} We compare our method against dense and deterministic sparse checkpoint placement strategies. We additionally include a logarithmic baseline with exponentially increasing gap sizes from the beginning of the prefix; in the figures this baseline is labeled \textsc{Logarithmic}. For hybrid models, we assume that all strategies store the entire KV cache for the full attention layers; they differ only in the location of recurrent layer checkpoints. Since modern serving systems store KV cache in blocks and efficient GPU kernels operate blockwise~\citep{dao2022flashattention,yang2024fla,lefaudeux2022xformers,kwon2023pagedattention}, we clip every checkpoint position to the block boundaries; the block size is $B=128$ tokens throughout. All methods are subjected to this same implementation constraint, so the measured system curves should be read as blockwise execution results rather than as an idealized token-level runtime.

\begin{table}[!t]
\centering
\caption{Recurrent-state checkpoint placement strategies. All hybrid-model runs store the full attention KV cache; the methods differ only in where recurrent-layer checkpoints are placed; $L$ is the cached prefix length, $B$ is the block size, and $M$ is the recurrent-checkpoint budget.}\label{tab:baselines}
\small
\begin{tabular}{l l l}
\toprule
Strategy & Checkpoint positions & Tail recompute \\
\midrule
No cache        & ---                                                          & $L$ \\
KV only         & ---                                                          & $L$ \\
Balanced        & $\lfloor i(L+1)/(M+1)\rfloor,\ i=1,\dots,M$          & $\left\lceil\frac{L+1}{M+1}\right\rceil - 1$ \\
Block           & $B, 2B, \ldots, \lfloor L/B \rfloor B$                                            & $L \bmod B$ \\
$\sqrt{L}$      & $\lfloor\sqrt{L}\rfloor$-spaced                                            & $L \bmod \lfloor\sqrt{L}\rfloor$  \\
\midrule
DP-optimal      & solved via Theorem~\ref{thm:dp}                            & distribution-dependent \\
\bottomrule
\end{tabular}
\normalsize
\end{table}

\paragraph{Our method.} In the main results section we report the exponentially reweighted DP-optimal checkpoint placement with $\gamma=0.99$, re-solving and updating the histogram every 10 requests to tackle distribution shift.

\paragraph{Datasets.} In real-world data, unrelated
requests almost never have long common prefixes, because the number of
possible token sequences grows combinatorially with length. However,
requests can share long non-trivial prefixes if they start with a static chunk of
text. We group such datasets into two kinds:
\begin{enumerate}[(1)]
\item \emph{single document with multiple questions}: QuALITY~\citep{pang2022quality}, NarrativeQA~\citep{kocisky2018narrativeqa};
\item \emph{long shared prompt}: we take real system prompts from major LLM providers~\citep{systempromptleaks2024}, and combine them with real user queries from the ShareGPT~\citep{sharegpt2023} dataset.
\end{enumerate}

All these datasets are structured so that requests come in groups that may have long prefix overlap, and the maximum possible speedup is bounded by its length. We limit each dataset to 1000 requests for prototype wall-clock runs. For savings results, which can be calculated without performing real inference and storing the KV cache, we use 10K requests. To simulate online cache state, we order requests according to a Poisson arrival process; under the fixed last-$K$ policy, only the induced request order affects the results.

\paragraph{KV cache management.} Orthogonal to checkpoint placement efficiency, the maximum speedup from prefix caching is bounded by the cache size and cache eviction policies: even perfectly placed checkpoints do not bring any speedup if the cached checkpoints have already been evicted. To isolate this effect, we use a simple caching strategy that keeps exactly the last $K$ entries. We do not include a direct baseline from~\cite{pan2024marconi} in these plots because Marconi et al. optimize admission and eviction across entries, whereas here we fix a simple last-$K$ policy precisely to isolate within-entry checkpoint placement.

\paragraph{Implementation details.}
All prototype timing experiments were run on an NVIDIA RTX~2080~Super with 8~GB VRAM, paired with an Intel Core~i5-12400F CPU and 16~GB DDR4 RAM, using the HuggingFace Transformers model implementation, CUDA~13.0, and PyTorch~2.10. We use a cache capacity of $K=100$ entries for NarrativeQA (simulation), $K=50$ for System Prompts, and $K=35$ for QuALITY under the fixed last-$K$ policy; these values are chosen to fit the available CPU RAM. We use block size $B=64$, matching the block granularity of the Flash Linear Attention kernels, and an exponentially weighted overlap histogram with $\gamma=0.99$ refreshed every $10$ requests. We use chunked prefill, because when resuming from a non-first position HuggingFace Transformers materializes a full attention mask that does not fit the available VRAM for realistic request lengths. Reported timing curves are obtained from a single deterministic run for each dataset and budget; pilot runs with different random seeds produced negligible variation, so we omit error bars in this prototype evaluation and defer a more thorough statistical treatment to a full-device study on production hardware.

\paragraph{Capturing checkpoints.} We report inference time and capture recurrent-state checkpoints in separate untimed runs, because the current Flash Linear Attention library does not expose a Python API for non-last checkpoint extraction, although the underlying low-level kernels already return recurrent states at block-$64$ positions. We view removing this off-critical-path separation as a straightforward engineering task and defer it, together with the full-device evaluation, to future work.

\paragraph{Overlap distributions.} Figure~\ref{fig:overlap-distribution}
shows the empirical overlap depth distributions across all three datasets. The
distributions are far from uniform: QuALITY has a sharp spike near the full
document length, NarrativeQA exhibits a wide multimodal spread, and System
Prompts concentrate mass at short prefix lengths. These shapes explain why
distribution-aware placement outperforms balanced spacing, especially at low
budgets.

\begin{figure}[!t]
  \centering
  \includegraphics[width=\textwidth]{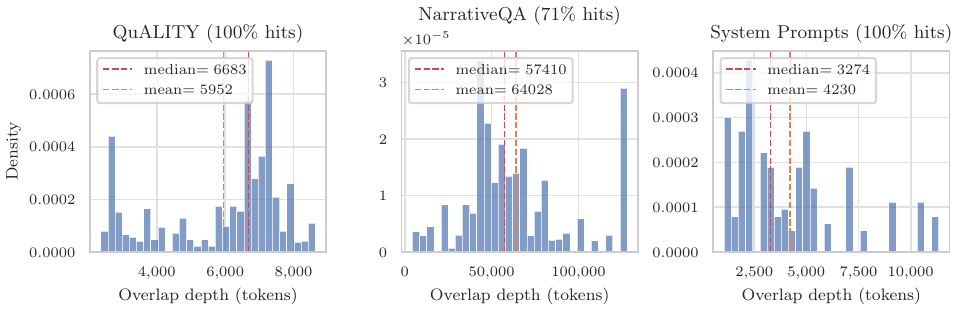}
  \caption{Empirical overlap depth distributions across datasets.}
  \label{fig:overlap-distribution}
\end{figure}

\section{Results}
\label{sec:results}

\paragraph{Token savings.}
Figure~\ref{fig:savings} shows the recurrent-work reduction factor on NarrativeQA, where sequences are too long for the same prototype timing setup on available hardware. The DP-optimal schedule consistently dominates the deterministic baselines across the full budget range. The advantage is largest at low checkpoint budgets, where the empirical overlap distribution is most non-uniform, and narrows as the budget grows and all methods cover an increasing fraction of the prefix.

\begin{figure}[!t]
  \centering
  \includegraphics[width=.9\textwidth]{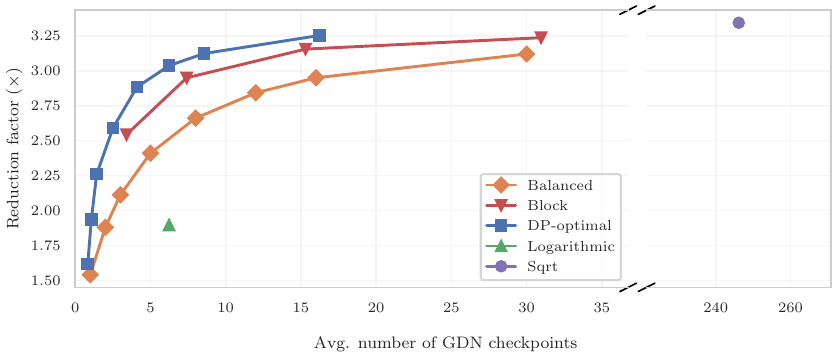}
  \caption{Recurrent-work reduction factor vs.\ checkpoint budget on NarrativeQA.}
  \label{fig:savings}
\end{figure}

\paragraph{Prototype wall-clock time.}
Figure~\ref{fig:speedup} shows prototype wall-clock Pareto fronts on QuALITY and System
Prompts.
Wall-clock time was measured on a representative layer group (1 full attention layer
and 3 GatedDeltaNet layers) of the Qwen-3.5-0.8B model~\citep{qwen2025qwen35}. We therefore interpret these measurements as evidence that token savings translate into runtime savings under a realistic implementation constraint, rather than as full-model serving benchmarks.

\begin{figure}[!t]
  \centering
  \includegraphics[width=\textwidth]{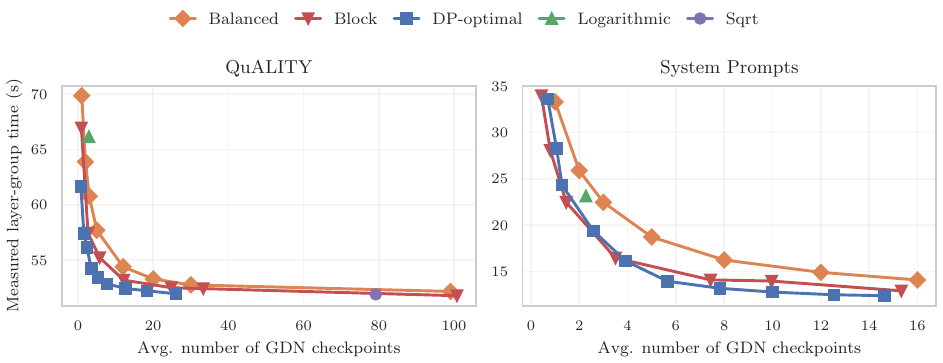}
  \caption{Prototype layer-group Pareto fronts: measured layer-group time vs.\ checkpoint budget on (a) QuALITY and (b) System Prompts.}
  \label{fig:speedup}
\end{figure}

\paragraph{When does distribution-aware placement help?}
The advantage of DP-optimal over balanced is largest at low checkpoint budgets
and on workloads with concentrated overlap distributions. On System Prompts
(Fig.~\ref{fig:speedup}b), where overlap mass concentrates at short prefix
lengths (Fig.~\ref{fig:overlap-distribution}c), the DP places checkpoints
densely near the divergence peak and leaves the tail uncovered, achieving
the same measured runtime as block caching but with fewer checkpoints. On QuALITY
(Fig.~\ref{fig:speedup}a), the gap between DP-optimal and balanced is smaller
because the overlap distribution is closer to a point mass at the full
document length, reducing the scope for non-uniform placement. In the limit of
large budgets, all strategies converge because every prefix position is
approximately covered.

\begin{figure}[!t]
  \centering
  \includegraphics[width=\textwidth]{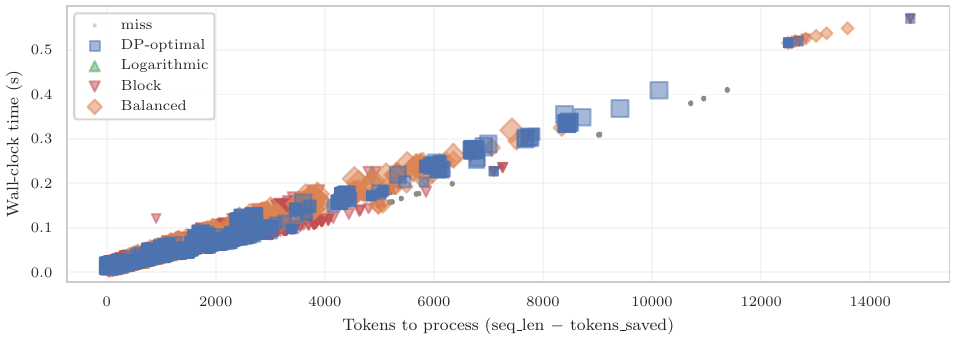}
  \caption{Real wall-time vs amount of recurrent work on System Prompts dataset.
  Marker size corresponds to amount of cache for CPU$\to$GPU transfer.}
  \label{fig:time-per-token}
\end{figure}

For NarrativeQA, recall that we report the reduction factor rather than wall-clock time because the sequences do not fit the prototype setup. We verify in Fig.~\ref{fig:time-per-token} that the number of tokens to recompute is essentially proportional to wall-clock time on the smaller datasets, so the reduction-factor curves should translate directly to runtime savings on hardware that does fit NarrativeQA.

\paragraph{Overhead decomposition.}
The wall-clock cost of a cache hit consists of:
\begin{inparaenum}[(i)]
\item loading the stored checkpoint from CPU to GPU,
\item replaying the suffix through the recurrent layers, and
\item for the attention layers, loading the cached KV state. 
\end{inparaenum}

Figure~\ref{fig:time-per-token} shows that wall-clock time is approximately linear in the number of tokens to recompute, with a small constant offset due to cache loading. This offset is visible as the nonzero intercept for the hit strategies; its magnitude grows with the amount of cached state (marker size in the figure). For the checkpoint budgets used in our experiments, the loading overhead is small relative to the recomputation savings, which is why the prototype Pareto fronts show net runtime improvement.

\section{Conclusion}
\label{sec:conclusion}

Sparse prefix caching is based on a structural asymmetry in hybrid and recurrent LLMs: exact reuse does not require dense per-token recurrent state. By storing only a sparse set of exact checkpoints and replaying the missing suffix, we obtain a new memory/recompute trade-off that cannot be obtained in dense KV-centric serving.

The main limitation is that our approach is most useful when many requests share a substantial but not identical prefix inside a retained cache entry; in some situations this is indeed the case, but append-only chat workloads have different properties: if you include all history every time and only append to it, caching the last checkpoint is obviously sufficient.

For future work, we first of all note engineering challenges: to implement caching in a useful way one has to implement efficient state extraction and restoration in production runtimes and combine checkpoint placement with Marconi-style admission/eviction. As for theory, we note that our single-prefix theory can hopefully be further extended to general prefix trees with branching and eviction.

\bibliographystyle{plainnat}
\bibliography{paper}

\appendix
\section{Proofs}
\label{app:proofs}

All theorem and lemma statements in this appendix appear in the main text; we only restate them here to the extent needed for the proofs, and do not repeat this notice for individual results.

\subsection{Balanced sparse checkpoints}
\begin{lemma}[Gap decomposition under uniform overlap]
\label{lem:uniform-gap}
If $p_t = 1/N$ for $t=1,\dots,N$, then
\begin{equation}
N\,\mathbb{E}[r(T;\mathcal{C})]
= \sum_{i=0}^{M} \sum_{h=0}^{g_i-1} h
= \sum_{i=0}^{M} \frac{g_i(g_i-1)}{2}.
\label{eq:uniform-gap-objective}
\end{equation}
\end{lemma}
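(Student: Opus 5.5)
The plan is to partition the depths $t\in\{1,\dots,N\}$ according to which gap they fall into, and then evaluate the recomputation cost $r(t;\mathcal{C})$ gap by gap.

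\textbf{Step 1: the gaps partition the depths.} With $c_0=0$ and $c_{M+1}=N+1$, the half-open intervals $I_i=[c_i,c_{i+1})\cap\mathbb{Z}$ for $i=0,\dots,M$ are pairwise disjoint. Their union is $\{0,1,\dots,N\}$. Each $I_i$ contains exactly $g_i$ integers.

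\textbf{Step 2: evaluate $r$ on each gap.} For $t\in I_i$, the deepest checkpoint not exceeding $t$ is $c_i$. For $i\ge1$ this holds because $c_i\le t<c_{i+1}$ and the $c_j$ are increasing. For $i=0$ there is no checkpoint $\le t$, so $\ell(t;\mathcal{C})=0=c_0$ by the convention $\max(\{0\}\cup\cdot)$. Hence $r(t;\mathcal{C})=t-c_i$. As $t$ ranges over $I_i$, this quantity takes each value $h=0,1,\dots,g_i-1$ exactly once.

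\textbf{Step 3: sum over depths.} Under the uniform law,
\[
N\,\mathbb{E}[r(T;\mathcal{C})]=\sum_{t=1}^N r(t;\mathcal{C}) .
\]
We may add the term $t=0$, which lies in $I_0$ and contributes $r(0)=0$. The sum then runs over all of $\{0,\dots,N\}=\bigsqcup_i I_i$, and regrouping by gaps gives
\[
\sum_{i=0}^{M}\sum_{h=0}^{g_i-1}h .
\]
The identity $\sum_{h=0}^{g-1}h=g(g-1)/2$ yields the closed form.

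\textbf{Main obstacle.} The argument is routine. The only delicate point is bookkeeping at the endpoints. The first gap must be handled via the $\{0\}$ convention and the phantom depth $t=0$. The last gap ends at $N$ because $c_{M+1}=N+1$. Checking these two endpoints is where I would be careful.
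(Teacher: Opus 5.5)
Your proposal is correct and takes the paper's approach: both arguments note that on each gap $[c_i, c_{i+1})$ the deepest reusable checkpoint is $c_i$, so the recompute costs run through $0,1,\dots,g_i-1$, and then sum with the uniform weights. Your explicit treatment of the phantom depth $t=0$ is slightly more careful than the paper. The paper's proof silently includes $t=0$ in the first gap (its list $t=c_0,\dots,c_1-1$ starts at $0$), even though $0$ is not in the support, whereas you note that it contributes $r(0)=0$.
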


\begin{proof}[Proof of Lemma~\ref{lem:uniform-gap}]
On the gap $(c_i,c_{i+1})$, the reusable depth remains fixed at $c_i$, so the
recompute costs on positions $t=c_i,c_i{+}1,\dots,c_{i+1}{-}1$ are precisely
$0,1,\dots,g_i{-}1$.  Under the uniform law $p_t=1/N$, every depth contributes
equally, hence (\ref{eq:uniform-gap-objective}).
\end{proof}

\begin{proof}[Proof of Theorem~\ref{thm:balanced-optimal}.1]
 We will show that a checkpoint set $\mathcal{C}$ minimizes
$\mathbb{E}[r(T;\mathcal{C})]$ $\Leftrightarrow$ its induced gap lengths satisfy $g_i \in \{q,q+1\}$ for all $i$, with
exactly $\rho$ gaps equal to $q+1$. In particular, any evenly spaced schedule such as
\[
c_i = \left\lfloor \frac{i(N+1)}{M+1} \right\rfloor, \qquad i=1,\dots,M,
\]
is optimal.

$\Rightarrow$ By Lemma~\ref{lem:uniform-gap}, minimizing $\mathbb{E}[r(T;\mathcal{C})]$ is
equivalent to minimizing
\[
\sum_{i=0}^{M} \phi(g_i),
\qquad
\phi(x)=\frac{x(x-1)}{2},
\]
over positive integers $g_i$ summing to $N{+}1$.  The function $\phi$ is
strictly convex.

Suppose there exist indices $i,j$ with $g_i \ge g_j{+}2$.  Replacing
$(g_i,g_j)$ by $(g_i{-}1,g_j{+}1)$ preserves the total sum and changes the
objective by
\[
\phi(g_i)+\phi(g_j)-\phi(g_i{-}1)-\phi(g_j{+}1)
= g_i-g_j-1 > 0.
\]
Hence any feasible gap vector with two gaps differing by at least~2 can be
strictly improved.  Therefore every optimum must have all gaps differing by at
most one.

$\Leftarrow$ Now let $K=M+1$ and write
\[
N+1 = qK + \rho, \qquad 0 \le \rho < K.
\]
The only
positive integers summing to $N{+}1$ and differing pairwise by at most one are
exactly $K{-}\rho$ copies of $q$ and $\rho$ copies of $q{+}1$.
Conversely, $\phi$ is permutation-invariant and
every such gap multiset has the same objective value, so every such schedule is
optimal.

Finally, $c_i=\lfloor i(N{+}1)/(M{+}1)\rfloor$ induces gaps differing by at
most one, hence it is optimal.

The exact optimal cost follows by substituting this gap multiset into Lemma~\ref{lem:uniform-gap}:
\[
N\,\mathbb{E}[r(T;\mathcal{C}^\star)]
=
(K-\rho)\frac{q(q-1)}{2}
+ \rho\frac{(q+1)q}{2}.
\]
Since $q = \lfloor (N{+}1)/(M{+}1)\rfloor = N/(M{+}1) + O(1)$, dividing by $N$ gives $\mathbb{E}[r(T;\mathcal{C}^\star)] = N/(2(M{+}1)) + O(1)$ as $N\to\infty$ with $M$ fixed.
\end{proof}

\begin{proof}[Proof of Theorem~\ref{thm:balanced-optimal}.2]

Let
\[
R_\infty(\mathcal{C}) = \max_{1 \le t \le N} r(t;\mathcal{C}).
\]
We will show that 
\[
\min_{|\mathcal{C}|=M} R_\infty(\mathcal{C})
=
\left\lceil \frac{N+1}{M+1} \right\rceil - 1.
\]
and balanced schedules reach the minima for every $M$.

The maximum recomputation equals the largest gap minus one:
$R_\infty(\mathcal{C}) = \max_{0\le i\le M} (g_i-1)$.
Since the $M{+}1$ positive integers $g_0,\dots,g_M$ sum to $N{+}1$, the
pigeonhole principle gives
\[
\max_i g_i \ge \left\lceil \frac{N+1}{M+1}\right\rceil,
\]
and therefore
$R_\infty(\mathcal{C}) \ge \lceil (N{+}1)/(M{+}1)\rceil - 1$.

For the upper bound, take any balanced schedule from
Theorem~\ref{thm:balanced-optimal}.  All gaps are either $q$ or $q{+}1$, so
$R_\infty(\mathcal{C}) = \lceil (N{+}1)/(M{+}1)\rceil - 1$, and the lower
bound is attained.

\end{proof}

\subsection{Histogram-based oracle}

\begin{proof}[Proof of Lemma~\ref{thm:stability}]
For any fixed checkpoint set $\mathcal{C}$,
\[
\left|
\mathbb{E}_p[r(T;\mathcal{C})]-\mathbb{E}_q[r(T;\mathcal{C})]
\right|
=
\left|
\sum_{t=1}^N (p_t-q_t)\,r(t;\mathcal{C})
\right|
\le
\sum_{t=1}^N |p_t-q_t|\, r(t;\mathcal{C})
\le
N \|p{-}q\|_1,
\]
since $0 \le r(t;\mathcal{C}) \le N$ for every $t$.
Optimizing over $\mathcal{C}$ on both sides and using a standard
swap argument (as in the proof of Theorem~\ref{thm:empirical-oracle} below)
extends the bound to the value function $V_M$.
\end{proof}

\begin{proof}[Proof of Theorem~\ref{thm:empirical-oracle}]
Let
\[
F_n = \|\hat p^{(n)} - p\|_1.
\]
By the Cauchy--Schwarz inequality,
\[
F_n \le \sqrt{N}\,\|\hat p^{(n)} - p\|_2.
\]
Taking expectations and applying Jensen's inequality,
\[
\mathbb{E}[F_n]
\le
\sqrt{N\,\mathbb{E}\!\left[\|\hat p^{(n)} - p\|_2^2\right]}.
\]
Now
\[
\mathbb{E}\!\left[\|\hat p^{(n)} - p\|_2^2\right]
=
\sum_{t=1}^N \mathrm{Var}(\hat p^{(n)}_t)
=
\frac{1}{n}\sum_{t=1}^N p_t(1-p_t)
\le \frac{1}{n},
\]
so
\[
\mathbb{E}[F_n] \le \sqrt{\frac{N}{n}}.
\]

Next, changing one sample $T_i$ modifies the empirical histogram by moving mass
$1/n$ from one coordinate to another, so $F_n$ changes by at most $2/n$.
McDiarmid's inequality therefore gives
\[
\Pr\!\left[
F_n - \mathbb{E}[F_n] \ge \varepsilon
\right]
\le
\exp\!\left(-\frac{n\varepsilon^2}{2}\right).
\]
Setting $\varepsilon = \sqrt{2\log(1/\delta)/n}$ yields, with probability at
least $1-\delta$,
\[
F_n
\le
\sqrt{\frac{N}{n}} + \sqrt{\frac{2\log(1/\delta)}{n}}.
\]

For the plug-in bound, let $\widehat{\mathcal{C}}_n$ be optimal under
$\hat p^{(n)}$.  Then
\begin{align*}
\mathbb{E}_p[r(T;\widehat{\mathcal{C}}_n)] - V_M(p)
&\le
\bigl|\mathbb{E}_p[r(T;\widehat{\mathcal{C}}_n)] - \mathbb{E}_{\hat p^{(n)}}[r(T;\widehat{\mathcal{C}}_n)]\bigr|
+ \bigl|V_M(\hat p^{(n)}) - V_M(p)\bigr| \\
&\le N F_n + N F_n = 2N F_n,
\end{align*}
using Lemma~\ref{thm:stability} for both terms.
\end{proof}

\subsection{Tracking Under Drift}

\begin{proof}[Proof of Theorem~\ref{thm:drift-tracking}]
Write
\[
w_{t,s}=\frac{(1-\gamma)\gamma^{t-s}}{1-\gamma^t},
\qquad
\hat p_t^{(\gamma)}=\sum_{s=1}^t w_{t,s} e_{T_s},
\qquad
\bar p_t^{(\gamma)}=\mathbb{E}[\hat p_t^{(\gamma)}]
=
\sum_{s=1}^t w_{t,s} p_s.
\]
Then by the triangle inequality,
\[
\mathbb{E}\!\left[\|\hat p_t^{(\gamma)}-p_t\|_1\right]
\le
\|\bar p_t^{(\gamma)}-p_t\|_1
+ \mathbb{E}\!\left[\|\hat p_t^{(\gamma)}-\bar p_t^{(\gamma)}\|_1\right].
\]
The bias term is
\[
\|\bar p_t^{(\gamma)}-p_t\|_1
=
\left\|
\sum_{s=1}^t w_{t,s}(p_s-p_t)
\right\|_1
\le
\sum_{s=1}^t w_{t,s}\|p_s-p_t\|_1.
\]

For the stochastic term, Jensen and Cauchy--Schwarz give
\[
\mathbb{E}\!\left[\|\hat p_t^{(\gamma)}-\bar p_t^{(\gamma)}\|_1\right]
\le
\sqrt{
N\,\mathbb{E}\!\left[\|\hat p_t^{(\gamma)}-\bar p_t^{(\gamma)}\|_2^2\right]
}.
\]
Because the samples are independent and centered cross-terms vanish,
\[
\mathbb{E}\!\left[\|\hat p_t^{(\gamma)}-\bar p_t^{(\gamma)}\|_2^2\right]
=
\sum_{s=1}^t w_{t,s}^2\,
\mathbb{E}\!\left[\|e_{T_s}-p_s\|_2^2\right].
\]
Now
\[
\mathbb{E}\!\left[\|e_{T_s}-p_s\|_2^2\right]
=
1-\|p_s\|_2^2 \le 1,
\]
so
\[
\mathbb{E}\!\left[\|\hat p_t^{(\gamma)}-\bar p_t^{(\gamma)}\|_2^2\right]
\le
\sum_{s=1}^t w_{t,s}^2.
\]
Finally,
\[
\sum_{s=1}^t w_{t,s}^2
=
\frac{(1-\gamma)^2}{(1-\gamma^t)^2}\sum_{k=0}^{t-1}\gamma^{2k}
=
\frac{(1-\gamma)^2}{(1-\gamma^t)^2}\cdot\frac{1-\gamma^{2t}}{1-\gamma^2}
=
\frac{1-\gamma}{1+\gamma}\cdot\frac{1+\gamma^t}{1-\gamma^t}.
\]
Combining the bias and variance bounds proves the claim.
\end{proof}

\subsection{Proof of Theorem~\ref{thm:dp}}
\begin{proof}
The recurrence follows from the cost decomposition in Section~\ref{sec:theory}.
For the $O(NM)$ complexity, write $w(s,j) = (T_j - T_{s-1}) - s(P_j - P_{s-1})$.
Then
\[
\mathrm{dp}[m,j] = T_j + \min_{1 \le s \le j}
\bigl(\mathrm{dp}[m{-}1,s{-}1] - T_{s-1} + s P_{s-1} - s P_j\bigr).
\]
For fixed $m$, each candidate $s$ contributes a line $y = (-s)x +
(\mathrm{dp}[m{-}1,s{-}1] - T_{s-1} + s P_{s-1})$ evaluated at query point
$x = P_j$.  The slopes $-s$ are strictly decreasing in $s$, and the query
points $P_j$ are non-decreasing in $j$ (since $p_t \ge 0$). A monotone convex
hull trick therefore evaluates each layer in $O(N)$ amortized time, giving
$O(NM)$ total.

\emph{Note:} if some $p_t = 0$, consecutive query points may coincide, but the
monotone-pointer technique handles ties without affecting the amortized
bound.
\end{proof}

\section{Additional Figures}
\label{app:figures}

\begin{figure}[H]
  \centering
  \includegraphics[width=\textwidth]{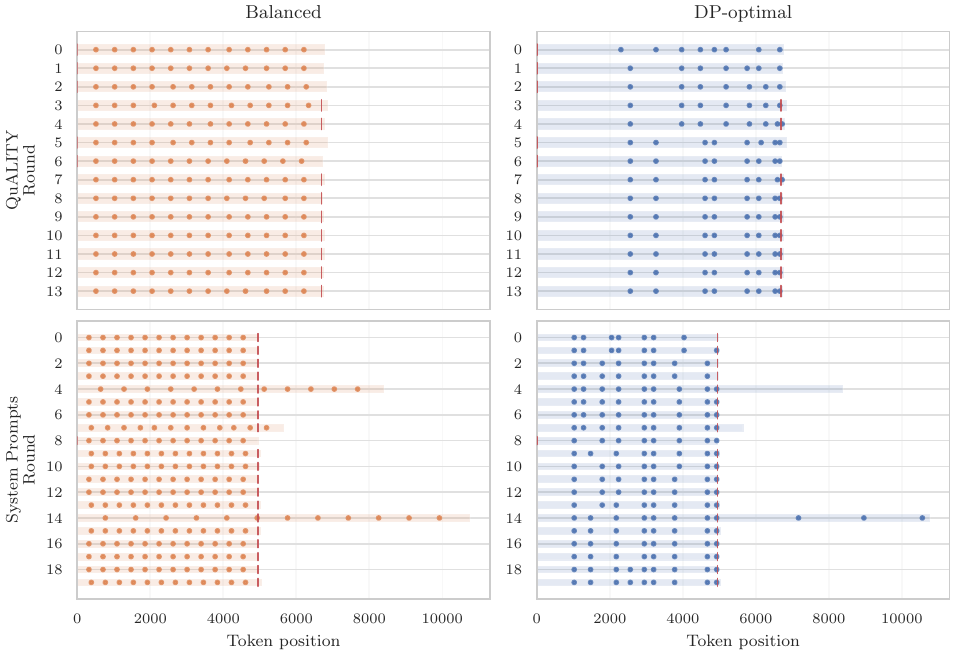}
	\caption{Checkpoint placements for sampled conversations (a) QuALITY,
	(b) System Prompts }
  \label{fig:checkpoint-placement}
\end{figure}

\end{document}